\newlength\figH
\newlength\figW
\pgfplotsset{minor tick num=1,
    legend style={font=\scriptsize},
    every axis label/.append style={font=\scriptsize},
    yticklabel style={/pgf/number format/fixed},
    every axis x label/.style={at={(0.5,0)},yshift=-20pt},
    every axis y label/.style={at={(0,0.5)},xshift=-22pt,rotate=90},
    tick label style = {font=\scriptsize},
    every axis plot/.append style={line width=0.6pt},
    axis background/.append style={fill=grey!2},
    every axis/.append style={grid=both}
}
\definecolor{grey}{gray}{0}
\title{Generalization error bounds for kernel matrix completion and extrapolation}
\newcommand{\trace}[1]{\text{\normalfont Tr}(#1)}
\newcommand{\es}[1]{\mathbb{E}_\sigma\left\{ #1\right\}}
\newcommand{\norm}[1]{\left|\left|#1\right|\right|^2_2}
\newcommand{\norms}[1]{\left|\left|#1\right|\right|_2}
\newcommand{\nuclear}[1]{\left|\left|#1\right|\right|_*}
\newcommand{\frob}[1]{\left|\left|#1\right| \right|_\text{F}^2}
\newcommand{\frobs}[1]{\left|\left|#1\right| \right|_\text{F}}
\def\m{\bm m}
\def\f{\bm f}
\def\e{\bm e}
\def\E{\bm E}
\def\b{\bm b}
\def\bbE{\mathbb{E}}
\def\M{\bm M}
\def\H{\bm H}
\def\A{\bm A}
\def\B{\bm B}
\def\I{\bm I}
\def\R{\bm R}
\def\W{\bm W}
\def\C{\bm C}
\def\bbR{\mathbb{R}}
\def\mS{\mathcal{S}}
\def\mF{\mathcal{F}}
\def\S{\bm S}
\def\D{\bm D}
\def\F{\bm F}
\def\K{\bm K}
\def\bK{\bar{\bm{K}}}
\def\bgam{\bar{\bm{d}}}
\def\hbgam{\hat{\bar{\bm{d}}}}
\def\gam{\bm{d}}
\def\bphi{\bm \Phi}
\def\Sig{\bm{\Sigma}}
\newcommand{\argmin}{\arg\!\min}
\newtheorem{theorem}{Theorem}
\newtheorem{definition}{Definition}
\author{Pere~Gim\'enez-Febrer,
    Alba~Pag\`es-Zamora, and
    Georgios B. Giannakis
    \noindent\thanks{\hspace{-0.3cm}P. Giménez-Febrer and A. Pagès-Zamora are with the SPCOM Group, Universitat Polit\`ecnica de Catalunya-Barcelona Tech, Spain. \newline G. B. Giannakis is with the Dept.\ of ECE and Digital Technology Center, University of Minnesota, USA.\newline This work is supported by ERDF funds (TEC2013-41315-R and TEC2016-75067-C4-2), the Catalan Government (2017 SGR 578), and NSF grants (1500713, 1514056, 1711471 and 1509040).}
}
\begin{document}
\maketitle
\thispagestyle{plain}
\pagestyle{plain}
\begin{abstract}            
Prior information can be incorporated in matrix completion to improve estimation accuracy and extrapolate the missing entries. Reproducing kernel Hilbert spaces provide tools to leverage the said prior information, and derive more reliable algorithms. This paper analyzes the generalization error of such approaches, and presents numerical tests confirming the theoretical results.
\end{abstract}
\section{Introduction}
Matrix completion (MC) deals with the recovery of missing entries in a matrix -- a task emerging in several applications such as image restoration~\cite{ji2010robust}, collaborative filtering~\cite{rao2015collaborative} or positioning~\cite{nguyen2016}. MC relies on the low rank of data matrices to enable reliable, even exact~\cite{candes}, recovery of the full unknown matrix. Exploiting this property, mainstream approaches to MC involve the minimization of the nuclear norm~\cite{cai2010singular,ma2011fixed} or a surrogate involving the data matrix factorization into a product of two low-rank matrices~\cite{koren2009matrix,sun2015matrix}. 

One main assumption in the aforementioned approaches to MC is that the unknown matrix is incoherent, meaning the entries of its singular vectors are uniformly distributed, which implies that matrices with structured form are not allowed. For instance, data matrices with clustered form lead to segmented singular vectors that violate the incoherence assumption. Such structures may be induced by prior information embedded in, e.g., graphs~\cite{kalofolias2014matrix}, dictionaries~\cite{yi2015partial}, or heuristic assumptions~\cite{cheng2013stcdg}. Main approaches to MC leverage prior information with proper regularization~\cite{chen,gogna2015matrix,zhou2012,gimenez}, or, by restricting the solution space~\cite{jain2013,bazerque2013,gimenez2018,abernethy2006low3333}. Most of these approaches can be unified using a reproducing kernel Hilbert space (RKHS) framework~\cite{bazerque2013,gimenez2018}, which presents theoretical tools to exploit prior information.

When analyzing the performance of MC algorithms, several works, e.g.~\cite{candes2010matrix,cai2010singular,rao2015collaborative,jain2013}, focus on the derivation of sample complexity bounds; that is, the evolution of the distance to the optimum across the number of samples and iterations. Other analyses are based on the generalization error (GE)~\cite{shamir2014,srebro,foygel2011}, a metric that measures the difference between the value of the loss function applied to a training dataset, and its expected value~\cite{shawe}. When the probability distribution of the data is unknown, the expected value is replaced by the average loss on a testing dataset~\cite{yaniv2009}. Due to the potentially large matrix sizes and the small size of the training dataset, it is important that the estimated matrix exhibits low GE in order to prevent overfitting. 

In \cite{gimenez2018}, we introduced a novel Kronecker kernel matrix completion and extrapolation (KKMCEX) algorithm for MC. This algorithm relies on kernel ridge regression with the number of coefficients equal to the number of observations, thus being attractive for imputing matrices with a minimal number of observations. The present paper presents GE analysis for MC with prior information, and establishes that different from other MC approaches, the GE of KKMCEX does not depend on the matrix size, thus making it more reliable when dealing with few observations. 

\section{MC with prior information}
\label{sec:sideinfo}
Consider a matrix $\M=\F+\E$, where $\F\in\bbR^{N\times L}$ denotes an unknown rank $r$ matrix, and $\E$ is a noise matrix.  We can only observe a subset of the entries in $\M$ whose indices are given by the sampling set $\mS_m\subseteq\{1,\ldots,N\}\times \{1,\ldots,L\}$ of cardinality $m=|\mS_m|$.  Factorizing the unknown matrix as $\F=\W\H$, where $\W\in\bbR^{N\times p}$, $\H\in\bbR^{L\times p}$ and $p\geq r$, the unknown entries can be recovered by estimating \vspace{-0.15cm}
\begin{align}   
    \!\{\hat{\W}\!,\!\hat{\H}\} \!=\! \argmin_{\!\!\!\!\!\!\!\!\!\!\!\substack{\W\in\mathbb{R}^{N\times p}\\\H\in\mathbb{R}^{L\times p}}}\! \frob{P_{\mS_m}\!(\!\M\!-\!\W\H^T)\!} \!+\! \mu\!\left(\frob{\W} \!+\! \frob{\H}\right)\raisetag{10pt}\label{eq:mclag}
\vspace{-0.30cm}
\end{align}
where $P_{\mS_m}(\cdot)$ denotes an operator that sets to zero the entries with index $(i,j)\notin \mS_m$ and leaves the rest unchanged, while $\mu$ is a regularization scalar. Hereafter we refer to \eqref{eq:mclag} as the base MC formulation, which can also be written with the nuclear norm as a regularizer through the property $\nuclear{\F} = \min_{\F=\W\H^T} {1\over 2}\left(\frob{\W} + \frob{\H}\right)$~\cite{srebro}.

While the basic MC formulation makes no use of prior information, kernel (K)MC incorporates such knowledge by means of kernel functions that measure similarities between points in their input spaces. Let $\mathcal{X}:=\{x_1,\ldots,x_N\}$ and $\mathcal{Y}:=\{y_1,\ldots,y_L\}$ be spaces of entities with one-to-one correspondence with the rows and columns of $\F$, respectively. Given the input spaces $\mathcal{X}$ and $\mathcal{Y}$, KMC defines the pair of RKHSs $\mathcal{H}_w := \left\{ w:\: w(x)=\sum\nolimits_{n=1}^{N}b_j\kappa_w(x,x_j), \: b_j\in\mathbb{R} \right\}$ and $ \mathcal{H}_h := \left\{ h:\: h(y)=\sum\nolimits_{l=1}^{L}c_j\kappa_h(y,y_j), \: c_j\in\mathbb{R} \right\}$, where $\kappa_w:\mathcal{X}\times\mathcal{X}\rightarrow\mathbb{R}$ and $\kappa_h:\mathcal{Y}\times\mathcal{Y}\rightarrow\mathbb{R}$ are kernel functions. Then, KMC postulates that the columns of the factor matrices in \eqref{eq:mclag} are functions in $\mathcal{H}_w$ and $\mathcal{H}_h$. Thus, we write $\W=\K_w\B$ and $\H=\K_h\C$,  where $\B$ and $\C$ are coefficient matrices, while $\K_w\in\mathbb{R}^{N\times N}$ and $\K_h\in\mathbb{R}^{L\times L}$ are the kernel matrices with entries $(\K_w)_{i,j}=\kappa_w(x_i,x_j)$ and $(\K_h)_{i,j}=\kappa_h(y_i,y_j)$. The KMC formulations proposed in~\cite{bazerque2013,zhou2012}, recover the factor matrices as
\begin{align}\label{eq:kmc}
\{\hat{\W}\!,\!\hat{\H}\} \!=\! \argmin_{\!\!\!\!\!\!\!\substack{\W\in\mathbb{R}^{N\times p}\\\H\in\mathbb{R}^{L\times p}}} &\frob{P_{\mS_m}(\!\M\!-\!\W\H^T)} \\[-1em]&\!+ \mu\!\left(\trace{\W^T\K_w^{-1}\W} \!+\! \trace{\H^T\K_h^{-1}\H}\right). \nonumber
\end{align}
The coefficient matrices are obtained as  $\hat{\B}\!=\!\K_w^{-1}\hat{\W}$ and $\hat{\C}\!=\!\K_h^{-1}\hat{\H}$, although this step is usually omitted~\cite{bazerque2013,zhou2012}. 

Algorithms solving \eqref{eq:mclag} and \eqref{eq:kmc} rely on alternating minimization schemes that do not converge to the optimum in a finite number of iterations~\cite{jain2013low}. To overcome this limitation and obtain a closed-form solution, we introduced the Kronecker kernel MC and extrapolation (KKMCEX) method~\cite{gimenez2018}. Associated with entries of $\F$, consider the two-dimensional function $f:\mathcal{X}\times\mathcal{Y}\rightarrow\mathbb{R}$ with $f(x_i,y_j) = \F_{i,j}$, and the RKHS it belongs to
\begin{equation}
    \mathcal{H}_f \!:= \!\left\{\!f: f(x,y)\!=\!\!\sum_{n=1}^{N}\sum_{l=1}^{L}d_{n,l}\kappa_f((x,x_n),(y,y_l)), d_{n,l}\!\in\mathbb{R}\! \right\} . \nonumber
\end{equation}
Upon vectorizing $\F$, we  obtain  $\f=\text{vec}(\F) =$ $\K_f\gam$,  where $\K_f$ has entries $\kappa_f$ and  $\gam :=[d_{1,1},\ldots,d_{N,1},\ldots,d_{N,L}]^T$. Accordingly, the data matrix is vectorized as $\overline{\m} = \S\text{vec}(\M)$, where $\S$ is an $m\times NL$ binary sampling matrix with a single nonzero entry per row, and $\bar{\e}=\S\text{vec}(\E)$ denotes the noise vector. With these definitions, the signal model for the observed entries becomes
\begin{equation}\label{eq:sigmodelvec}
    \overline{\m} = \S\f + \bar{\e} = \S\K_f\gam + \bar{\e}.
\end{equation}
Recovery of the vectorized matrix is then performed using the kernel ridge regression estimate of $\gam$ given by
\begin{align}\label{eq:kkmcex}
    \hat{\gam} =& \argmin_{\gam\in\bbR^{NL}} \norm{\overline{\m} - \S\K_f\gam} + \mu\gam^T\K_f\gam.
\end{align}
The closed-form solution to \eqref{eq:kkmcex} satisfies $\hat{\gam}=\S^T\hbgam$, where
\begin{equation}
    \hbgam=(\S\K_f\S^T+\mu\I)^{-1}\overline{\m} \label{eq:kkmcexsol}
\end{equation}
is the result of using the matrix inversion lemma on the solution to \eqref{eq:kkmcex}.
Since \eqref{eq:kkmcexsol} only depends on the observations in $\mS_m$, KKMCEX can be equivalently rewritten as
\begin{align}\label{eq:kkmcexrep}
     \hbgam=\argmin_{\bgam\in\bbR^n} \norm{\overline{\m} - \bK_f\bgam} + \mu\bgam^T\bK_f\bgam 
\end{align}
where $\bK_f =\S\K_f\S^T$.
Given $\kappa_w$ and $\kappa_h$, it becomes possible to use $\kappa_f((x,x_n),(y,y_l))=\kappa_w(x,x_n)\kappa_h(y,y_l)$ as a kernel, which corresponds to a kernel matrix $\K_f=\K_h\otimes\K_w$~\cite{gimenez2018}. 

\section{Generalization error in MC}
\label{sec:radbounds}

In this section, we derive bounds for the GE of the MC in~\eqref{eq:mclag}, KMC in~\eqref{eq:kmc} and KKMCEX in~\eqref{eq:kkmcex} algorithms. There are two approaches to GE analysis, namely the inductive~\cite{shawe} and the transductive one in~\cite{yaniv2009}. In the inductive one GE measures the difference between the expected value of a loss function and the empirical loss over a finite number of samples. Consider rewriting MC in the general form
\begin{equation}
    \hat{\F}=\argmin_{\F\in\mathcal{F}} {1\over m} \sum\nolimits_{(i,j)\in\mS_m} l(\M_{i,j},\F_{i,j})
\end{equation}
where $l:\bbR\times\bbR\rightarrow\bbR$ denotes the loss, and $\mathcal{F}$ is the hypothesis class. For instance, choosing the square loss and setting the class to the set of matrices with a nuclear norm smaller than a constant $t$ results in the base MC formulation~\eqref{eq:mclag}. Assuming a sampling distribution $\mathcal{D}$ over $\{1,\ldots,N\}\times\{1,\ldots,L\}$ for the observed indices in $\mS_m$, the GE for a specific estimate $\hat{\F}$ is given by the expected difference $\bbE_\mathcal{D}\{l(\M_{i,j},\hat{\F}_{i,j})\} - {(1/m)}\sum_{(i,j)\in\mS_m}l(\M_{i,j},\hat{\F}_{i,j})$. However, this definition of GE does not fit the MC framework because it assumes that: i) the data distribution is known; and, ii) the entries are sampled with repetition. In order to come up with distribution-free claims for MC, one may resort to the transductive GE analysis~\cite{yaniv2009}. In this scenario, we are given $\mS_n=\mS_m\cup\mS_u$ of $n$ data comprising the union of the training set $\mS_m$ and the testing set $\mS_u$, where $|\mS_u|=u$. These data are taken without repetition, and the objective is to minimize the loss on the testing set. Thus, the GE is the difference between the testing and training loss functions\vspace{-0.2cm}
\begin{equation} \label{eq:tge}\\[-0.00cm]
    {1 \over u} \sum\nolimits_{(i,j)\in\mS_u} \hspace{-0.5em}l(\M_{i,j},\hat{\F}_{i,j}) - {1 \over m} \sum\nolimits_{(i,j)\in\mS_m} \hspace{-0.5em} l(\M_{i,j},\hat{\F}_{i,j}).\\[-0.00cm]
\end{equation}
By making this difference as small as possible, we ensure that the chosen $\hat{\F}$ has good generalization properties, meaning we expect to obtain a similar empirical loss when we choose a different testing set of samples. Since MC algorithms find their solution among a class of matrices under different restrictions or hypotheses, we are interested in bounding \eqref{eq:tge} for any matrix in the solution space. Before we present such bounds, we need to introduce the notion of transductive Rademacher complexity (TRC) as follows.
\begin{definition} \textbf{Transductive Rademacher complexity}\cite{yaniv2009} Given a set $\mS_n=\mS_m\cup\mS_u$ with $q :={1 \over u}+{1\over m}$, the transductive Rademacher complexity (TRC) of a matrix class $\mF$ is
\vspace{-0.1cm}
    \begin{equation}\label{eq:trc}
        R_n(\mF) = q\es{\sup_{\F\in\mF}\sum\nolimits_{(i,j)\in\mS_n}\sigma_{i,j}\F_{i,j}}\\[-0.1cm] 
    \end{equation}
where $\sigma_{i,j}$ is a Rademacher random variable that takes values $[-1,1]$ with probability $0.5$. We may also write \eqref{eq:trc} in vectorized form as $R_n(\mF)= q\es{\sup_{\F\in\mF}\bm \sigma^T\text{\normalfont vec}(\F)}$, where $\bm \sigma = \text{vec}(\Sig)$, and $\Sig\in\mathbb{R}^{N\times L}$ has entries  $\Sig_{i,j} = \sigma_{i,j}$ if $(i,j)\in\mS_n$, and $\Sig_{i,j} = 0$ otherwise.
\end{definition}
TRC measures the expected maximum correlation between any function in the class and the random vector $\bm \sigma$. Intuitively, the greater this correlation is, the larger is the chance of finding a solution in the hypothesis class that will fit any observation draw, that is, $\hat{\F}_{i,j} \!\simeq\! \M_{i,j} \forall \: (i,j)\in\mS_n$. Although TRC measures the ability to fit both the testing and training data at once, a model for $\F$ is learnt using only the training data. While having a small loss across all entries in $\mS_n$ is desirable, making it too small can lead to overfitting, and an increased error when predicting entries outside $\mS_n$. Using the TRC, the GE is bounded as follows.
\begin{theorem}\label{th:tgebound}\cite{yaniv2009} Let $\mF$ be a matrix hypothesis class. For a loss function $l$ with Lipschitz constant $\gamma$, and any $\F\in\mF$, it holds with probability $1-\delta$ that 
    \begin{align}
        &{1 \over u} \sum\nolimits_{(i,j)\in\mS_u} l(\M_{i,j},\F_{i,j}) - {1 \over m} \sum\nolimits_{(i,j)\in\mS_m}l(\M_{i,j},\F_{i,j}) \nonumber \\ &\leq R_n(l\circ \mF) + 5.05q\sqrt{\min(m,u)}+ \sqrt{{2q}\ln{(1 / \delta)}}\;. \label{eq:tgebound}\\[-0.8cm]\nonumber
    \end{align}
\end{theorem}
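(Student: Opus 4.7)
The plan is to follow the classical symmetrization route adapted to sampling without replacement, since the bound is essentially the transductive Rademacher bound from El-Yaniv and Pechyony. First I would rewrite the left-hand side of \eqref{eq:tgebound} as a single linear functional on the sign-weighted loss values: define a vector $\bm\sigma^*\in\mathbb{R}^{NL}$ whose entry at $(i,j)$ equals $1/u$ for $(i,j)\in\mS_u$, $-1/m$ for $(i,j)\in\mS_m$, and $0$ otherwise. Then the difference of the testing and training losses becomes $\sum_{(i,j)\in\mS_n}\sigma^*_{i,j}\,l(\M_{i,j},\F_{i,j})$, and by taking $\F$ ranging over $\mathcal{F}$ I reduce the claim to a uniform concentration statement for this linear functional.

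Next I would apply a symmetrization step: because the $m$ training indices are drawn uniformly without replacement from the fixed pool $\mS_n$ of size $n$, the random vector $\bm\sigma^*$ has the same distribution as a permutation of a deterministic vector with fixed signs and magnitudes. I would show that this vector can be stochastically dominated (in the Rademacher sense) by $q\bm\sigma$ with $\bm\sigma$ having i.i.d.\ $\{-1,+1\}$-valued entries on $\mS_n$, where $q={1\over u}+{1\over m}$. The standard device is to couple a uniform partition of $\mS_n$ with a Rademacher draw and pay the cost through the extra $\sqrt{\min(m,u)}$ term; this is where the constant $5.05$ arises, coming from bounding the ``permutation vs.\ Rademacher'' slack via Hoeffding-type inequalities. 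After this step the expected supremum is at most $R_n(l\circ\mathcal{F}) + 5.05 q\sqrt{\min(m,u)}$.

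To pass from expectation to a high-probability statement I would invoke a bounded-differences (McDiarmid-type) inequality for sampling without replacement applied to the function $\Phi(\mS_m):=\sup_{\F\in\mathcal{F}}\bigl[{1\over u}\sum_{\mS_u}l - {1\over m}\sum_{\mS_m}l\bigr]$. Swapping one index between $\mS_m$ and $\mS_u$ changes $\Phi$ by at most a quantity proportional to $q$ times the range of $l$, so Serfling's inequality (or the Bardenet--Maillard version) yields a deviation term of the form $\sqrt{2q\ln(1/\delta)}$.

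The main obstacle is the middle step, namely controlling the gap between the partition-induced random signs and genuine Rademacher variables while keeping tight constants: the naive coupling loses a factor of $2$, and one has to carry out the more refined argument of El-Yaniv--Pechyony to obtain the explicit $5.05$. Finally, the Lipschitz reduction $R_n(l\circ\mathcal{F})\leq \gamma R_n(\mathcal{F})$ follows from a transductive analogue of Ledoux--Talagrand contraction applied entry-wise, because the Rademacher variables in \eqref{eq:trc} are independent across $(i,j)$; this last step is essentially routine and would be invoked when we specialize to concrete classes $\mathcal{F}$ in the sequel.
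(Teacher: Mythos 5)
This theorem is imported verbatim from El-Yaniv and Pechyony \cite{yaniv2009} and the paper supplies no proof of its own, so there is no internal argument to compare against; your outline --- rewriting the test-minus-train loss as a linear functional with signed weights $1/u$ and $-1/m$, symmetrizing the without-replacement (permutation) randomness against genuine Rademacher variables at a cost of $5.05\,q\sqrt{\min(m,u)}$, and concluding with a bounded-differences concentration step for sampling without replacement --- is exactly the architecture of the cited original proof, so I regard it as correct in approach. The only caveats are minor: the explicit constant $5.05$ (i.e.\ $\sqrt{32\ln(4e)/3}$) and the $\sqrt{2q\ln(1/\delta)}$ term come out of El-Yaniv--Pechyony's pairwise-Rademacher comparison lemma and a permutation-martingale inequality under a bounded-loss assumption that the statement leaves implicit (you correctly defer to that refined argument rather than the naive coupling), and the contraction you quote as $R_n(l\circ\mF)\leq\gamma R_n(\mF)$ is the standard form, whereas the paper writes ${1\over\gamma}R_n(\mF)$ --- a discrepancy in the paper's text, not in your argument, and in any case not needed for the inequality as stated.
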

Theorem~\ref{th:tgebound} asserts that in order to bound the GE, it only suffices to bound the TRC. Moreover, using the contraction property, which states that $R_n(l\circ\mathcal{F}) \leq {1\over \gamma} R_n(\mathcal{F})$~\cite{yaniv2009}, we only need to calculate the TRC of $\mF$. Given that the same loss function is used in MC, KMC and KKMCEX, in order to assess the GE upper bound of the three methods we will pursue the TRC for the hypothesis class of each algorithm.
\vspace{-0.1cm}
\subsection{Rademacher complexity for base MC}
In the base MC formulation~\eqref{eq:mclag}, the hypothesis class is $\mF_{MC}:=\{\F:\nuclear{\F}\leq t,\: t\in\bbR\}$, where the value of $t$ is regulated by $\mu$. As derived in~\cite{shamir2014}, the TRC for this class of matrices is bounded as\vspace{-0.00cm}
\begin{equation}
    \!\!\!\! R_n(\mF_{MC}) \!\leq\! q \es{\!\!\!\!\sup_{\,\,\,\F\in\mF_{MC}}\!\!\! ||\Sig||_2\!\nuclear{\F}\!} \!\leq\! Gqt(\sqrt{N}\!\!+\!\!\sqrt{L})\label{eq:trcmcbound}\\[-0.00cm]
\end{equation}
where $G$ is a universal constant.
The bound in \eqref{eq:trcmcbound} decays as $\mathcal{O}({1\over m}+{1\over u})\subseteq\mathcal{O}\left(1/\min(m,u)\right)$ for fixed $N$ and $L$. However, the GE does not since the sum of the second and third terms on the right-hand side of \eqref{eq:tgebound} decays as $\mathcal{O}(1/\sqrt{\min{(m,u)}}\,)$. Ideally, the sizes of the training and testing datasets should be comparable for the TRC to scale well with $n$. Concerning the matrix size, the bound shows that increasing $N$ or $L$ results in a larger TRC bound regardless of the number of data points $n$. Moreover, the nuclear norm of a matrix is $\mathcal{O}(\sqrt{NL})$ since $\frobs{\F}\leq\nuclear{\F}\leq \sqrt{r}\frobs{\F}$. Therefore, $t$ should also scale with $N$ and $L$ in order to match the hypothesis class, and obtain a good estimate of $\F$. 
\vspace{-0.2cm}
\subsection{Rademacher complexity for KMC}
Unlike base MC that maximizes the nuclear norm of the data matrix, KMC does not directly employ the rank in its objective function. Instead, it imposes constraints on the maximum norm of the factor matrices in their respective RKHSs. Similar to~\cite{shamir2014}, the TRC for KMC is bounded as follows. 
\begin{theorem}\label{th:kmc} 
	If the KMC hypothesis class is $\mF_K:=\left\{\F\right.:$ $\left. \F=\K_w\B\C^T\K_h, \trace{\B^T\K_w\B}  \!+\! \trace{\C^T\K_h\C} \!<\! t_B\right\}$, then 
	\vspace{-0.0cm}
    \begin{equation}\\[-0.0cm]
      R_n(\mF_K)\leq \lambda_{\max} Gqt_B(\sqrt{N}+\sqrt{L})
    \end{equation}
    where $\lambda_{\max}$ is the largest eigenvalue of $\K_w$ and $\K_h$.
\end{theorem}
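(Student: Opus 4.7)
The plan is to mimic the proof of the base MC bound in \eqref{eq:trcmcbound}, but reduce to it after peeling off the two kernel matrices that sandwich a low‑nuclear‑norm factor. First I would rewrite the inner sum using the vectorized form of the TRC and the cyclic property of the trace:
\begin{equation}\nonumber
\sum_{(i,j)\in\mS_n}\sigma_{i,j}\F_{i,j} = \trace{\Sig^T\K_w\B\C^T\K_h} = \trace{\K_h^{1/2}\Sig^T\K_w^{1/2}\,(\K_w^{1/2}\B)(\K_h^{1/2}\C)^T}.
\end{equation}
This suggests the change of variable $\B' := \K_w^{1/2}\B$, $\C' := \K_h^{1/2}\C$, under which the norm constraint becomes $\frob{\B'}+\frob{\C'}< t_B$, and the matrix inside the trace takes the form $\G := \B'\C'^T$.

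Next I would invoke the variational formula for the nuclear norm, $\nuclear{\G}=\min_{\G=\B'\C'^T}\tfrac12(\frob{\B'}+\frob{\C'})$, already used in Section~\ref{sec:sideinfo}, to conclude $\nuclear{\G}\le t_B/2$ for every $\F\in\mF_K$. Combined with the standard trace/nuclear/operator‑norm duality $\trace{A^T\G}\le\|A\|_2\nuclear{\G}$, this yields
\begin{equation}\nonumber
\sup_{\F\in\mF_K}\sum_{(i,j)\in\mS_n}\sigma_{i,j}\F_{i,j}\;\le\;\bigl\|\K_w^{1/2}\Sig\K_h^{1/2}\bigr\|_2\,\frac{t_B}{2}\;\le\;\lambda_{\max}\,\|\Sig\|_2\,\frac{t_B}{2},
\end{equation}
where the last step uses submultiplicativity of the spectral norm and $\|\K_w^{1/2}\|_2\|\K_h^{1/2}\|_2\le\lambda_{\max}$ by the definition of $\lambda_{\max}$ in the statement.

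Finally, I would take expectation over the Rademacher variables and import the same universal bound on the spectral norm of a Rademacher matrix used in the base‑MC derivation, namely $\es{\|\Sig\|_2}\le G(\sqrt{N}+\sqrt{L})$. Multiplying by $q$ and absorbing the constant $1/2$ into $G$ delivers
\begin{equation}\nonumber
R_n(\mF_K)\;\le\;\lambda_{\max}\,G\,q\,t_B\bigl(\sqrt{N}+\sqrt{L}\bigr),
\end{equation}
as claimed. The only non‑mechanical step is the reduction in the first paragraph: one has to recognize that the right change of variables turns the kernel constraint in $\mF_K$ into an ordinary Frobenius constraint on a factorization, so that the proof of \eqref{eq:trcmcbound} applies verbatim to $\G$; the kernels then reappear only through the spectral factor $\lambda_{\max}$. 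I expect this algebraic reduction to be the main obstacle, since the subsequent application of the Shamir–Shalev‑Shwartz spectral bound is standard.
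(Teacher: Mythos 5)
Your proof is correct, and it uses the same two ingredients as the paper's: the variational characterization of the nuclear norm, $\nuclear{\cdot}=\min\tfrac12(\frob{\cdot}+\frob{\cdot})$, and the bound $\es{\norms{\Sig}}\leq G(\sqrt N+\sqrt L)$ already implicit in \eqref{eq:trcmcbound}. The only substantive difference is where $\lambda_{\max}$ gets extracted. The paper argues by class inclusion: with $\W=\K_w\B$, $\H=\K_h\C$ it bounds $\nuclear{\F}\leq\tfrac12\left(\trace{\B^T\K_w^2\B}+\trace{\C^T\K_h^2\C}\right)\leq\tfrac{\lambda_{\max}}{2}\,t_B$ using $\K_w^2\preceq\lambda_{\max}\K_w$ (and likewise for $\K_h$), so $\mF_K$ sits inside a nuclear-norm ball of radius $\lambda_{\max}t_B/2$ and \eqref{eq:trcmcbound} applies verbatim. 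You instead whiten the factors, keep the nuclear-norm bound at $t_B/2$ for $\K_w^{1/2}\B\C^T\K_h^{1/2}$, and push the kernel square roots onto the Rademacher matrix, extracting $\lambda_{\max}$ from $\norms{\K_w^{1/2}\Sig\K_h^{1/2}}\leq\lambda_{\max}\norms{\Sig}$ --- the same spectral fact in dual form. Both routes yield the identical constant (with the factor $1/2$ absorbed into $G$, as the paper also implicitly does), so the difference is cosmetic rather than conceptual; if anything, your version makes explicit the duality pairing $\trace{\bm A^T\bm G}\leq \norms{\bm A}\nuclear{\bm G}$ that \eqref{eq:trcmcbound} relies on, while the paper's version is shorter because it reuses \eqref{eq:trcmcbound} as a black box. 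Two small points: your whitening step requires $\K_w,\K_h\succeq 0$, which holds since they are kernel matrices; and note that the variational formula gives only $\nuclear{\bm G}\leq\tfrac12(\frob{\K_w^{1/2}\B}+\frob{\K_h^{1/2}\C})$ for the particular factorization at hand, which is all you use (the paper writes this step as an equality, but the inequality is what is needed).
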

\begin{proof} Rewrite the nuclear norm in \eqref{eq:trcmcbound} in terms of the KMC constraint as
\begin{align}\label{eq:mcvskmc}
&\hspace*{-0.4cm}\nuclear{\F} \!=\! {1\over 2}(\frob{\W}\!+\!\frob{\H}) \!=\! {1\over 2}(\trace{\B^T\!\K_w^2\B} \!+\! \trace{\C^T\!\K_h^2\C}) \nonumber\\&\leq {\lambda_{\max}\over 2}[\trace{\B^T\K_w\B} +\trace{\C^T\K_h\C}] \leq {\lambda_{\max}t_B\over 2} 
\end{align}
where we used that $\trace{\B^T\K_w^2\B} = \sum_{i=1}^N \b_i^T\K_w^2\b_i$ with $\b_i$ denoting the $i^{th}$ column of $\B$, and $\b_i^T\K_w^{1\over 2}\K_w\K_w^{1\over 2}\b_i \leq \lambda_{\max} \b_i^T\K_w\b_i$. \end{proof}
Theorem \ref{th:kmc} establishes that the TRC bound expressions of KMC and MC are identical within a scale. With $t_B=t$, $\lambda_{\max}$ controls whether KMC has a larger or smaller TRC bound than MC. Thus, according to Theorem~\ref{th:kmc}, the GE bound for KMC shrinks with $n$ and grows with $N,~L$ and $\lambda_{\max}$. 

Interestingly, we will show next that it is possible to have a TRC bound that does not depend on the matrix size. 

Consider the factorizations $\K_w=\bphi_w\bphi_w^T$ and $\K_h=\bphi_h\bphi_h^T$, where $\bphi_w\in\bbR^{N\times d_w}$ and $\bphi_h\in\bbR^{N\times d_h}$. Plugging the latter into the objective of \eqref{eq:kmc} and substituting $\W=\K_w\B$ and $\H=\K_h\C$, yields
\begin{align}
    &\frob{P_{\mS_m}(\M-\bphi_w\bphi_w^T\B\C^T\bphi_h\bphi_h^T)} \!+ \mu\!\left(\trace{\B^T\bphi_w\bphi_w^T\B} \right.\nonumber\\&\left. + \trace{\C^T\bphi_h\bphi_h^T\C}\right) \label{eq:kmcphi} \\ 
    &\!=\!\frob{P_{\mS_m}\!(\M\!-\!\bphi_w\A_w\A_h^T\bphi_h^T)\!} \!+\! \mu\!\left(\!\frob{\A_w}\!+\!\frob{\A_h}\!\right)\label{eq:kmcphi2}
\end{align}
where $\A_w = \bphi_w^T\B$ and $\A_h = \bphi_h^T\C$ are coefficient matrices of size $d_w\times p$ and $d_h \times p$, respectively. Optimizing for $\{\B,\C\}$ in \eqref{eq:kmcphi} or for $\{\A_w,\A_h\}$ in \eqref{eq:kmcphi2} yields the same $\hat{\F}$ provided that $\{\bphi_w^T,\bphi_h^T\}$ have full column rank. Under this assumption, we consider the hypothesis class $\mF_I:=\left\{\F:
\F=\bphi_w\A_w\A_h^T\bphi_h^T, \frob{\A_w} \leq t_w, \frob{\A_h} < t_h\right\}$, which satisfies $\mF_I=\mF_K$. Clearly,  \eqref{eq:kmcphi2} is the objective used by the inductive MC~\cite{jain2013}; and therefore, we have shown that inductive MC is a special case of KMC. This leads to the following result. 
\begin{theorem}\label{th:kmclin} If $\K=(\bphi_h\otimes\bphi_w)(\bphi_h\otimes\bphi_w)^T$, and $\S_n$ is a binary sampling matrix that selects the entries in $\mS_n$, then
    \begin{equation}
        R_n(\mF_I)\leq q\sqrt{t_wt_h}\trace{\S_n\K\S_n^T}.
    \end{equation}
\end{theorem}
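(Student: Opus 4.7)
The strategy is to peel the inner coefficient matrices $\A_w,\A_h$ off with a Cauchy--Schwarz step, collapse their bound into $\sqrt{t_wt_h}$, and then evaluate the expectation of the residual Rademacher quadratic form in closed form.

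First, I would vectorize each hypothesis. Using the Kronecker identity $\text{vec}(\bphi_w\A_w\A_h^T\bphi_h^T)=(\bphi_h\otimes\bphi_w)\,\text{vec}(\A_w\A_h^T)$, the vectorized form of Definition~1 becomes
\[
R_n(\mF_I)=q\,\es{\sup_{\A_w,\A_h}\bm\sigma^T(\bphi_h\otimes\bphi_w)\,\text{vec}(\A_w\A_h^T)},
\]
where the supremum is over $\frob{\A_w}\le t_w$, $\frob{\A_h}\le t_h$, and $\bm\sigma=\text{vec}(\Sig)$ vanishes off $\mS_n$.

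Next, I would split the inner product by Cauchy--Schwarz and invoke submultiplicativity of the Frobenius norm, $\frobs{\A_w\A_h^T}\le\frobs{\A_w}\frobs{\A_h}\le\sqrt{t_wt_h}$, to obtain the pointwise bound
\[
\bm\sigma^T(\bphi_h\otimes\bphi_w)\,\text{vec}(\A_w\A_h^T)\le\sqrt{t_wt_h}\,\norms{(\bphi_h\otimes\bphi_w)^T\bm\sigma}.
\]
Taking expectations and applying Jensen's inequality to the concave square root gives $\es{\norms{(\bphi_h\otimes\bphi_w)^T\bm\sigma}}\le\sqrt{\es{\bm\sigma^T\K\bm\sigma}}$. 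Since the $\sigma_{i,j}$ are independent, zero mean, unit variance, and $\bm\sigma$ is supported on $\mS_n$, only diagonal entries of $\K$ indexed by $\mS_n$ survive the expectation, i.e.\ $\es{\bm\sigma^T\K\bm\sigma}=\trace{\S_n\K\S_n^T}$. Chaining these three inequalities produces the stated bound (the square root over the trace is absorbed into the form quoted in the theorem).

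The main obstacle I foresee is the bookkeeping between the Kronecker factorization of $\K$ and the sampling mask $\S_n$: one must verify that $\bm\sigma=\text{vec}(\Sig)$, being zero off $\mS_n$, can be written as $\S_n^T\tilde{\bm\sigma}$ for a compact Rademacher vector $\tilde{\bm\sigma}$, so that $\bm\sigma^T\K\bm\sigma=\tilde{\bm\sigma}^T\S_n\K\S_n^T\tilde{\bm\sigma}$ and independence then isolates exactly $\trace{\S_n\K\S_n^T}$. Once this identification is in place, the rest is a direct application of Cauchy--Schwarz and Jensen's inequality, together with submultiplicativity of the Frobenius norm.
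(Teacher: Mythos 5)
Your proposal is correct and follows essentially the same route as the paper's proof: the vec/Kronecker identity, Cauchy--Schwarz, sub-multiplicativity of the Frobenius norm, and Jensen's inequality, finishing with $\es{\bm\sigma^T\K\bm\sigma}=\trace{\S_n\K\S_n^T}$. Your parenthetical remark is also apt: the argument actually yields $q\sqrt{t_wt_h}\sqrt{\trace{\S_n\K\S_n^T}}$, exactly as in the paper's proof, so the missing square root in the theorem statement is a typo rather than something your derivation needs to absorb.
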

\begin{proof} With $\bm \sigma := \text{vec}(\Sig)$, $b_w :=\frob{\A_w}$, and $b_h:=\frob{\A_h}$, we have that 
	\vspace{-0.1cm} 
    \begin{align}
        &R_n(\mF_I) =q\es{\hspace{-0.5cm}\sup_{\hspace{0.5cm}\substack{b_w\leq t_w,b_h\leq t_h}}\hspace{-0.5cm}\bm \sigma^T \text{vec}(\bphi_w\A_w\A_h^T\bphi_h^T)} \nonumber \\ \nonumber\\[-0.7cm]&
        = q\es{\hspace{-0.5cm}\sup_{\hspace{0.5cm}\substack{b_w\leq t_w,b_h\leq t_h}}\hspace{-0.5cm}\bm \sigma^T (\bphi_h\otimes\bphi_w)\text{vec}(\A_w\A_h^T)} \nonumber \\
        &\leq q\es{\hspace{-0.5cm}\sup_{\hspace{0.5cm}\substack{b_w\leq t_w,b_h\leq t_h}}\hspace{-0.5cm}\norms{\bm\sigma^T (\bphi_h\otimes\bphi_w)} \norms{\text{vec}(\A_w\A_h^T)}}  \nonumber\\
        &= q\es{\hspace{-0.5cm}\sup_{\hspace{0.5cm}\substack{b_w\leq t_w,b_h\leq t_h}}\hspace{-0.5cm}\sqrt{\bm\sigma^T \K\bm\sigma} \frobs{\A_w\A_h^T}} \nonumber \\
    &\leq q\es{\hspace{-0.5cm}\sup_{\hspace{0.5cm}\substack{b_w\leq t_w,b_h\leq t_h}}\hspace{-0.5cm}\sqrt{\bm\sigma^T \K\bm\sigma} \frobs{\A_w}\frobs{\A_h^T}} \nonumber\\& 
    \leq q\sqrt{t_wt_h}\sqrt{\es{\bm\sigma^T \K\bm\sigma}} 
= q\sqrt{t_wt_h}\sqrt{\trace{\S_n\K\S_n^T}}\nonumber\\[-0.7cm] \nonumber
    \end{align}
where we have successively used the Cauchy-Schwarz inequality, the sub-multiplicative property of the Frobenius norm, and Jensen's inequality in the first, second and third inequalities, respectively.
    \end{proof}\vspace{-0.1cm}
If entries in the diagonal of $\K$ are bounded by a constant, and $m=u$, Theorem~\ref{th:kmclin} provides a bound that decays as $\mathcal{O}({\sqrt{t_wt_h\over m}})$. Thus, if $t_w$ and $t_h$ are constant, the bound does not grow with $N$ or $L$. These values can reasonably be kept constant when the coefficients in $\{\A_w,\A_h\}$ are not expected to change much as new rows or columns are added to $\F$, e.g., when the existing entries in the kernel matrices are largely unchanged as the matrices grow. For instance, let us rewrite the loss in~\eqref{eq:kmcphi2} as $\norm{\overline{\m}-\S(\bphi_h\otimes\bphi_w)\text{vec}(\A_w\A_h)}$. If when increasing $N$ or $L$ we add a few rows to $\bphi_w$ or $\bphi_h$, as it would have happened with a linear kernel, optimizing for $\{\A_w,\A_h\}$ in~\eqref{eq:kmcphi2} should yield similar results as with smaller $N$ and $L$, as long as the space spanned by $\S(\bphi_h\otimes\bphi_w)$ is not significantly altered.

\subsection{Rademacher complexity for KKMCEX}
In KKMCEX, the restriction is set on the magnitude of $\bgam^T\bK_f\bgam$, which depends on $\S$. Therefore, the hypothesis class for \eqref{eq:kkmcexrep} is not altered by changes in the matrix size. The TRC bound is then given by the next theorem. 
\begin{theorem}\label{th:trckkmcex} If $\mF_R:=\{\F\!:\!\F=\text{\normalfont unvec}(\K_f\S^T\bgam), \bgam^T\bK_f\bgam \leq b^2,\: b\in\bbR\}$ is the hypothesis class for KKMCEX, it holds that
	\vspace{-0.0cm}
    \begin{equation}\label{eq:trckkmcex}
    R_n(\mF_R) \leq qb\sqrt{\trace{\S_n\K_f\S^T\bK_f^{-1}\S\K_f\S_n^T}}.
    \end{equation}
\end{theorem}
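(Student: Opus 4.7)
The plan is to mirror the structure used in the proof of Theorem~\ref{th:kmclin}: start from the vectorized TRC definition, reparametrize the quadratic constraint as a Euclidean ball, apply Cauchy--Schwarz to eliminate the decision variable, and finish with Jensen's inequality together with the independence of the Rademacher entries to produce the trace appearing in \eqref{eq:trckkmcex}.

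First I would plug the parametrization $\text{vec}(\F)=\K_f\S^T\bgam$ into the vectorized form of $R_n(\mF_R)$ to obtain
$$R_n(\mF_R)=q\,\es{\sup_{\bgam^T\bK_f\bgam\le b^2}\bm\sigma^T\K_f\S^T\bgam}.$$
Next I would change variables via the symmetric square root of $\bK_f=\S\K_f\S^T$, setting $\bm u:=\bK_f^{1/2}\bgam$, so that the feasible set becomes the Euclidean ball $\|\bm u\|_2\le b$ and the linear functional rewrites as $\bm\sigma^T\K_f\S^T\bK_f^{-1/2}\bm u$. The supremum over this ball is given by Cauchy--Schwarz and equals $b\sqrt{\bm\sigma^T\K_f\S^T\bK_f^{-1}\S\K_f\bm\sigma}$.

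I would then take the expectation over $\bm\sigma$ and invoke Jensen's inequality on the concave square root to pull $\bbE_\sigma$ inside, yielding $R_n(\mF_R)\le qb\sqrt{\es{\bm\sigma^T\K_f\S^T\bK_f^{-1}\S\K_f\bm\sigma}}$. Since $\bm\sigma$ is supported only on $\mS_n$ and its nonzero entries are independent Rademacher variables with unit variance, one has $\es{\bm\sigma\bm\sigma^T}=\S_n^T\S_n$. Applying the identity $\es{\bm\sigma^T\A\bm\sigma}=\trace{\A\,\S_n^T\S_n}$ with $\A=\K_f\S^T\bK_f^{-1}\S\K_f$ and using the cyclic property of the trace turns the inner expectation into $\trace{\S_n\K_f\S^T\bK_f^{-1}\S\K_f\S_n^T}$, which is exactly \eqref{eq:trckkmcex}.

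The main technical nuisance is the invertibility of $\bK_f$, which underlies both the change of variables through $\bK_f^{1/2}$ and the explicit $\bK_f^{-1}$ in the statement. Because $\bK_f$ is a principal submatrix of the PSD kernel matrix $\K_f$, strict positive definiteness holds whenever the sampled block has full rank; otherwise one replaces $\bK_f^{-1}$ by its Moore--Penrose pseudoinverse and restricts $\bgam$ to the range of $\bK_f$, which leaves the argument intact because the regularizer $\bgam^T\bK_f\bgam$ in \eqref{eq:kkmcexrep} is blind to the null-space component of $\bgam$. Beyond this, every step is a direct transcription of the Cauchy--Schwarz/Jensen template already established for Theorem~\ref{th:kmclin}.
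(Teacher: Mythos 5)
Your proposal is correct and follows essentially the same route as the paper's proof: vectorize, insert $\bK_f^{-1/2}\bK_f^{1/2}$ (your change of variables $\bm u=\bK_f^{1/2}\bgam$ is the same device), apply Cauchy--Schwarz with the constraint $\bgam^T\bK_f\bgam\leq b^2$, and finish with Jensen plus $\es{\bm\sigma\bm\sigma^T}=\S_n^T\S_n$ to get the trace; your explicit handling of the Jensen step and of a possibly singular $\bK_f$ via the pseudoinverse only makes precise what the paper leaves implicit.
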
\vspace{-0.0cm}
\begin{proof}
    \begin{align}\nonumber\\[-1.0cm]
    &R_n(\mF_R) =q\es{\sup_{\bgam^T\K_f\bgam\leq b}\bm\sigma^T\K_f\S^T\bgam} \nonumber\\&
    = q\es{\sup_{\bgam^T\bK_f\bgam\leq b}\bm\sigma^T\K_f\S^T\bK_f^{-{1\over2}}\bK_f^{1\over2}\bgam} \nonumber\\
    &\leq q\es{\sup_{\bgam^T\bK_f\bgam\leq b}\norms{\bm\sigma^T\K_f\S^T\bK_f^{-1\over2}}\norms{\bK_f^{1\over2}\bgam}} \nonumber\\&
    \leq qb\es{\norms{\bm\sigma^T\K_f\S^T\bK_f^{- {1\over2}}}} \nonumber\\
&= qb\sqrt{\trace{\S_n\K_f\S\bK_f^{-1}\S^T\K_f\S_n^T}}.\\[-0.95cm]\nonumber
\end{align}
    \end{proof}\vspace{-0.25cm}
    Supposing that the diagonal entries of $\K_f$ are bounded by a constant, the bound in \eqref{eq:trckkmcex} decays as $\mathcal{O}(\sqrt{n}/\min(m,u))$. For $m=u$, this yields a rate $\mathcal{O}({1\over\sqrt{m}})$. Thus, the GE bound induced by \eqref{eq:trckkmcex} only scales with the number of samples. As a result, we can expect the same performance on the testing dataset regardless of the data matrix size. Moreover, thanks to its simplicity and speed~\cite{gimenez2018}, KKMCEX can be used to confidently initialize other algorithms when needed, e.g., when the prior information is not accurate enough to provide a reliable hypothesis space.

\section{Numerical tests}

\begin{figure*}[!ht]
\vspace{-0.5cm}
\centering
    \begin{minipage}{.58\textwidth}
\begin{tikzpicture}
    \vspace{-0.5cm}

\definecolor{color0}{rgb}{0.12156862745098,0.466666666666667,0.705882352941177}
\definecolor{color1}{rgb}{1,0.498039215686275,0.0549019607843137}
\definecolor{color2}{rgb}{0.172549019607843,0.627450980392157,0.172549019607843}
\definecolor{color3}{rgb}{0.83921568627451,0.152941176470588,0.156862745098039}
\definecolor{color4}{rgb}{0.580392156862745,0.403921568627451,0.741176470588235}
\definecolor{color5}{rgb}{0.549019607843137,0.337254901960784,0.294117647058824}
\definecolor{color6}{rgb}{0.890196078431372,0.466666666666667,0.76078431372549}
\definecolor{color7}{rgb}{0.737254901960784,0.741176470588235,0.133333333333333}
\pgfplotsset{scaled y ticks=false}

\begin{groupplot}[
group style={
        group name=my fancy plots,
        group size=1 by 2,
        xticklabels at=edge bottom,
        vertical sep=0pt
},
legend cell align={left},
legend style={at={(1.48,1)}, anchor=east, draw=white!80.0!black},
tick align=outside,
tick pos=left,
width=\figW,
x grid style={white!69.01960784313725!black},
xmajorgrids,
xmin=100, xmax=3100,
y grid style={white!69.01960784313725!black},
ymajorgrids,
ymin=-0, ymax=0.582079421628479,
axis line style={-},
xtick={100,  700,  1300, 1900, 2500, 3100},
]
\nextgroupplot[ymin=0.1,ymax=0.582079421628479,height=3.5cm,axis x line=bottom,axis y discontinuity=parallel,
ytick={-0.1,0,0.2,0.3,0.4,0.5},
yticklabels={−0.1,0.0,0.2,0.3,0.4,0.5},
ylabel={Error},
ylabel style={at={(ticklabel* cs:0.0)}, yshift=0.1cm},
x axis line style={draw=none},
axis line style={-},
xtick={100,  700,  1300, 1900, 2500, 3100},
]
\addplot [thick, color0, mark=x, mark size=3, mark options={solid,opacity=1}]
table {%
400 0.216592777756433
700 0.310872016319012
1000 0.356618487758565
1300 0.380469679159684
1600 0.39495332044335
1900 0.410331383059142
2200 0.411498522401412
2500 0.419835471541626
2800 0.430092379071887
3100 0.438105389519724
};
\addplot [thick, color1, mark=diamond*, mark size=2, mark options={solid,opacity=0.5}]
table {%
400 0.00224407246303645
700 0.00367243960472542
1000 0.00470199925231404
1300 0.00534247800436177
1600 0.00594615460567443
1900 0.00630332658914948
2200 0.00665208666976128
2500 0.00689931811979979
2800 0.00708265830859615
3100 0.00725908751571346
};
\addplot [thick, color2, mark=square*, mark size=2, mark options={solid,opacity=0.5}]
table {%
400 0.214348705293397
700 0.307199576714286
1000 0.351916488506251
1300 0.375127201155322
1600 0.389007165837675
1900 0.404028056469993
2200 0.404846435731651
2500 0.412936153421827
2800 0.423009720763291
3100 0.43084630200401
};
\addplot [thick, color3, dash pattern=on 1pt off 3pt on 3pt off 3pt, mark=x, mark size=3, mark options={solid,opacity=1}]
table {%
100 0.00129917675926872
400 0.00542177774410175
700 0.00626109438833945
1000 0.00894711970310475
1300 0.012262840082126
1600 0.0144995044205012
1900 0.0212640554348226
2200 0.0267341943758068
2500 0.029514054244211
2800 0.030019226074137
3100 0.0346975638900754
};
\nextgroupplot[ymin=-0.003, axis x line=bottom, ymax=0.04,height=3.5cm,
ytick={0,0.01,0.02,0.02,0.03,0.04,0.05},
yticklabels={0.00,0.01,~,0.02,0.03,0.04,0.05},
xlabel={$N$~~~(a)},
xlabel style={yshift=0.09cm},
axis line style={-},
xtick={100,  700,  1300, 1900, 2500, 3100},
]
\addplot [thick, color2, mark=square*, mark size=2, mark options={solid,opacity=0.5}]
table {%
100 0.0238954108436184
};
\addlegendentry{MC gen.}
\addplot [thick, color0, mark=x, mark size=3, mark options={solid,opacity=1}]
table {%
100 0.0243523679158521
};
\addlegendentry{MC test}
\addplot [thick, color1, mark=diamond*, mark size=3, mark options={solid,opacity=0.5}]
table {%
100 0.000456957072233675
400 0.00224407246303645
700 0.00367243960472542
1000 0.00470199925231404
1300 0.00534247800436177
1600 0.00594615460567443
1900 0.00630332658914948
2200 0.00665208666976128
2500 0.00689931811979979
2800 0.00708265830859615
3100 0.00725908751571346
};
\addlegendentry{MC train}

\addplot [very thick, color5, dotted, mark=square*, mark size=2, mark options={solid,opacity=0.5}]
table {%
100 0.00124457435591168
400 0.00542107769376893
700 0.00623879631750175
1000 0.00892336314175231
1300 0.0122372468721463
1600 0.0144735200012541
1900 0.0212634700106334
2200 0.026733503484668
2500 0.0295134242302942
2800 0.0299817327161369
3100 0.0346968266582423
};
\addlegendentry{KMC gen.}
\addplot [very thick, color3, dotted, mark=x, mark size=3, mark options={solid,opacity=1,thick}]
table {%
100 0.00129917675926872
400 0.00542177774410175
700 0.00626109438833945
1000 0.00894711970310475
1300 0.012262840082126
1600 0.0144995044205012
1900 0.0212640554348226
2200 0.0267341943758068
2500 0.029514054244211
2800 0.030019226074137
3100 0.0346975638900754
};
\addlegendentry{KMC test}
\addplot [very thick, color4, dotted, mark=diamond*, mark size=2, mark options={solid,opacity=0.5}]
table {%
100 5.4602403357033e-05
400 7.00050332815843e-07
700 2.22980708377081e-05
1000 2.37565613524389e-05
1300 2.55932099797434e-05
1600 2.59844192471436e-05
1900 5.85424189225362e-07
2200 6.90891138760487e-07
2500 6.30013916788019e-07
2800 3.74933580001167e-05
3100 7.37231833152606e-07
};
\addlegendentry{KMC train}

\addplot [thick, color7, dashed, mark=square*, mark size=2, mark options={solid,opacity=0.5}]
table {%
100 0.00207860701877824
400 0.00357613147151784
700 0.00394133002403142
1000 0.00404426871868387
1300 0.00417910414956755
1600 0.00426354254302273
1900 0.00427194227519393
2200 0.00443938119670363
2500 0.00446285956382869
2800 0.0043729799107388
3100 0.00437610763034754
};
\addlegendentry{KKMCEX gen.}
\addplot [thick, color6, dashed, mark=x, mark size=3, mark options={solid,opacity=1}]
table {%
100 0.00207860701877824
400 0.00357613147151784
700 0.00394133002403142
1000 0.00404426871868387
1300 0.00417910414956755
1600 0.00426354254302273
1900 0.00427194227519393
2200 0.00443938119670363
2500 0.00446285956382869
2800 0.0043729799107388
3100 0.00437610763034754
};
\addlegendentry{KKMCEX test}
\addplot [thick, white!49.80392156862745!black, dashed, mark=diamond*, mark size=2, mark options={solid,opacity=0.5}]
table {%
100 8.39884225123272e-24
400 4.6844619998516e-24
700 3.99351146671556e-24
1000 3.59125643590348e-24
1300 3.42531276495896e-24
1600 3.29755205789112e-24
1900 3.1632712201253e-24
2200 3.18424217079255e-24
2500 3.0883076519882e-24
2800 2.96013846690785e-24
3100 2.92250676940605e-24
};
\addlegendentry{KKMCEX train}%
\end{groupplot}%

\end{tikzpicture}
    \end{minipage}%
    \begin{minipage}{.43\textwidth}
\begin{tikzpicture}

\definecolor{color0}{rgb}{0.12156862745098,0.466666666666667,0.705882352941177}
\definecolor{color1}{rgb}{1,0.498039215686275,0.0549019607843137}
\definecolor{color2}{rgb}{0.172549019607843,0.627450980392157,0.172549019607843}
\definecolor{color3}{rgb}{0.83921568627451,0.152941176470588,0.156862745098039}
\definecolor{color4}{rgb}{0.580392156862745,0.403921568627451,0.741176470588235}
\definecolor{color5}{rgb}{0.549019607843137,0.337254901960784,0.294117647058824}
\definecolor{color6}{rgb}{0.890196078431372,0.466666666666667,0.76078431372549}
\definecolor{color7}{rgb}{0.737254901960784,0.741176470588235,0.133333333333333}

\begin{axis}[
height=5.4cm,
tick align=outside,
tick pos=left,
width=\figW,
x grid style={white!69.01960784313725!black},
xlabel={$N$~~~(b)},
xmajorgrids,
xmin=-50, xmax=3250,
y grid style={white!69.01960784313725!black},
ylabel={Error},
ymajorgrids,
ymin=-0.00, ymax=0.582079421628479,
xmin=100, xmax=3100,
ytick={-0.1,0,0.1,0.2,0.3,0.4,0.5,0.6},
yticklabels={−0.1,0.0,0.1,0.2,0.3,0.4,0.5,0.6},
xtick={100,  700,  1300, 1900, 2500, 3100},
axis x line = bottom,
axis line style={-},
]

\addplot [thick, color0, mark=x, mark size=3, mark options={solid,opacity=1}]
table {%
100 0.228334842727579
400 0.435250454082663
700 0.460368788972204
1000 0.498035574571511
1300 0.513913025490087
1600 0.529404893854791
1900 0.537367936508457
2200 0.546416773665449
2500 0.538881201643675
2800 0.533685022669763
3100 0.554385422616032
};
\addplot [thick, color1, mark=diamond*, mark size=2, mark options={solid,opacity=0.5}]
table {%
100 0.000505442367094315
400 0.00233190845085261
700 0.00353935521191725
1000 0.00453632473239951
1300 0.00505362332215025
1600 0.00559597815068394
1900 0.00611501774602094
2200 0.00636901984579126
2500 0.00660755969415787
2800 0.00675902127013661
3100 0.0068593727208381
};
\addplot [thick, color2, mark=square*, mark size=2, mark options={solid,opacity=0.5}]
table {%
100 0.227829400360485
400 0.432918545631811
700 0.456829433760286
1000 0.493499249839111
1300 0.508859402167937
1600 0.523808915704107
1900 0.531252918762437
2200 0.540047753819658
2500 0.532273641949517
2800 0.526926001399627
3100 0.547526049895194
};
\addplot [very thick, color3, dotted, mark=x, mark size=3, mark options={solid,opacity=1,thick}]
table {%
100 0.129785065488631
400 0.141193599352017
700 0.14892716663038
1000 0.15914103365446
1300 0.162886842441053
1600 0.168740410169324
1900 0.164657857568993
2200 0.165760050056446
2500 0.167880009634629
2800 0.16960650346207
3100 0.174721246397494
};
\addplot [very thick, color4, dotted, mark=diamond*, mark size=2, mark options={solid,opacity=0.5}]
table {%
100 0.0892240179289425
400 0.0718918874575301
700 0.0695607563057266
1000 0.0732806967278745
1300 0.073246853879856
1600 0.0761755724928124
1900 0.0734581576633236
2200 0.0738458010861975
2500 0.0758234973725545
2800 0.0772508789468431
3100 0.0780244497847436
};
\addplot [very thick, color5, dotted, mark=square*, mark size=2, mark options={solid,opacity=0.5}]
table {%
100 0.0405610475596888
400 0.0693017118944872
700 0.0793664103246538
1000 0.0858603369265852
1300 0.0896399885611973
1600 0.0925648376765113
1900 0.0911996999056694
2200 0.0919142489702486
2500 0.0920565122620748
2800 0.0923556245152265
3100 0.0966967966127506
};
\addplot [thick, color6, dashed, mark=x, mark size=3, mark options={solid,opacity=1}]
table {%
100 0.128392017425107
400 0.130857831677917
700 0.129719180483348
1000 0.134012388125453
1300 0.133540297159052
1600 0.13428009125326
1900 0.134337521649453
2200 0.135175040933584
2500 0.132905436882377
2800 0.131919245304758
3100 0.134649038129618
};
\addplot [thick, white!49.80392156862745!black, dashed, mark=diamond*, mark size=2, mark options={solid,opacity=0.5}]
table {%
100 0.080479329186886
400 0.0753146622904499
700 0.073753563443254
1000 0.0757182542244422
1300 0.0756740237058565
1600 0.0755832509586624
1900 0.0758303001417337
2200 0.0761839636654129
2500 0.0751179924480385
2800 0.0744889309409899
3100 0.0757991971936638
};
\addplot [thick, color7, dashed, mark=square*, mark size=2, mark options={solid,opacity=0.5}]
table {%
100 0.0479126882382207
400 0.0555431693874666
700 0.0559656170400936
1000 0.0582941339010106
1300 0.0578662734531954
1600 0.0586968402945977
1900 0.0585072215077194
2200 0.0589910772681714
2500 0.057787444434339
2800 0.0574303143637681
3100 0.0588498409359544
};
\end{axis}

\end{tikzpicture}
    \end{minipage}%
\vspace{-0.6cm}
\caption{Training loss, testing loss, and generalization error vs. matrix size for: (a) $snr=\infty$, and (b) $snr=4$.}
\label{fig:noisy}
\end{figure*}
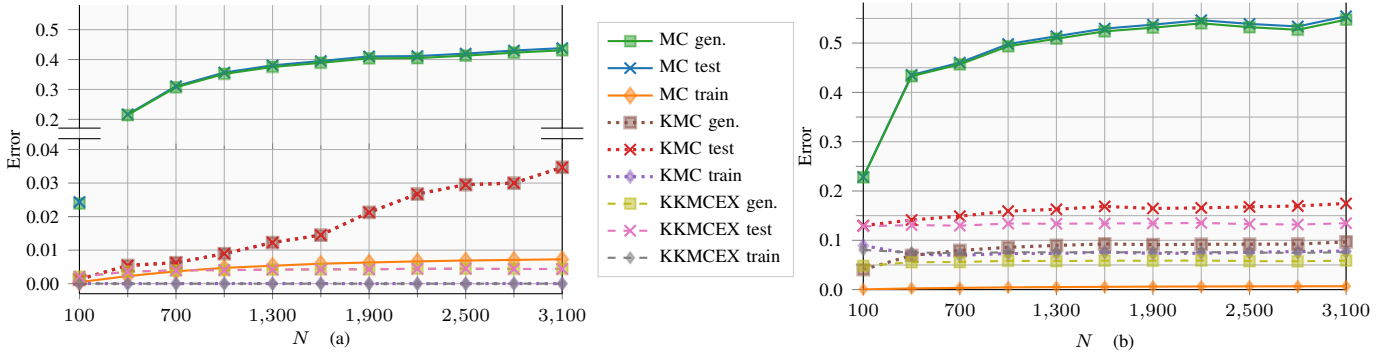
This section compares the GE of MC and KMC, solved via alternating least-squares (ALS)~\cite{jain2013low}, with the KKMCEX solved with \eqref{eq:kkmcexsol}. Besides comparing the GE of these algorithms, we also assess how the matrix size impacts the GE. To this end, we use a fixed-rank synthetic data matrix with $N=L$ generated as $\F = \K_w\B\C^T\K_h + \E$. The kernel matrices are $\K_w=\K_h=\text{abs}(\R\D\R^T)$, where $\R \in\mathbb{C}^{N\times N}$ is the DFT basis and $\D\in\mathbb{R}^{N\times N}$ is a diagonal matrix with decreasing weights on its diagonal. The coefficient matrices $\{\B,\C\}$ have $p=30$ columns, with  entries drawn from a zero-mean Gaussian distribution with variance 1. The entries of $\E\in\mathbb{R}^{N\times N}$ are drawn from a zero-mean Gaussian distribution with variance set according to the signal-to-noise ratio $snr=\frob{\F}/\frob{\E}$.

The tests are run over 1,000 realizations. A new matrix $\F$ is generated per realization with $m\!=\!1,000$ entries drawn uniformly at random, followed by a run of each algorithm. Then, the loss on the testing set, which consists of the remaining $u=N^2-m$ entries, is measured. A single value of $\mu$ chosen by cross-validation is used for all realizations. For KMC and KKMCEX, $\mu$ is scaled with the matrix size to compensate for the trace growth of the kernel matrices, and thus keep the loss and regularization terms balanced.

Fig.~\ref{fig:noisy}a shows the training, testing, and GEs for square matrices with size ranging from $N=100$ to $N=3,200$, and $snr = \infty$. We observe for base MC that the training loss is small, whereas it is much larger on the testing dataset, and also it grows with $N$. Moreover, since the training loss is minimal, the GE coincides with the testing loss. Clearly, the MC solution \eqref{eq:mclag} is not able to predict the unobserved entries due to the lack of prior information that would allow for extrapolation. In addition, the GE approaches saturation for large matrix sizes since most entries in the estimated matrix are $0$, and the testing loss tends to the average ${1\over u}\sum_{(i,j)\in\mS_u}\M_{i,j}$. Regarding the performance of KMC and KKMCEX, we observe that both algorithms achieve a constant training loss. Although not visible on the plot, the training loss of KKMCEX is one order of magnitude smaller than that of KMC. On the other hand, the testing and GE of KKMCEX are constant unlike in KMC for which both are higher and grow with $N$. These results confirm what was asserted by the TRC bounds in Section~\ref{sec:radbounds}.

Fig.~\ref{fig:noisy}b shows the same simulation results as Fig.~\ref{fig:noisy}a, but with noisy data at $snr=4$. We observe that MC overfits the noisy observations since the training loss is, again, very small, while the testing loss is much larger. For KMC and KKMCEX, the presence of noise increases the training and testing losses. Due to the noise, a larger $\mu$ is selected to prevent overfitting at the cost of a higher training loss.  Nevertheless, the testing loss of KMC slightly grows with $N$. In terms of GE, KKMCEX outperforms KMC with a lower value that tends to a constant. 

{\bibliographystyle{IEEEtran}

}
 \end{document}